\newcommand\norm[1]{\left\lVert#1\right\rVert}
\newcommand\pd[2]{\frac{\partial #1}{\partial #2}}
\newcommand\he[3]{\frac{\partial^2 #1}{\partial #2\partial #3}}
\newtheorem{remark}{Remark}
\newtheorem{theorem}{Theorem}[section]
\newtheorem{lemma}[theorem]{Lemma}
\begin{document}\IEEEoverridecommandlockouts
\IEEEpubid{\makebox[\columnwidth]{This work has been presented at ICDM 2015 as a regular paper.} \hspace{\columnsep}\makebox[\columnwidth]{ }}
%
\title{A Unified Gradient Regularization Family for Adversarial Examples}

\author{\IEEEauthorblockN{Chunchuan Lyu}
\IEEEauthorblockA{Department of Computer Science \\and Software Engineering\\
Xi'an Jiaotong-Liverpool University\\
Email: chunchuan.lv@gmail.com}
\and
\IEEEauthorblockN{Kaizhu Huang}
\IEEEauthorblockA{Department of Electrical \\and Electronic Engineering\\
Xi'an Jiaotong-Liverpool University\\
Email:Kaizhu.Huang@xjtlu.edu.cn}
\and
\IEEEauthorblockN{Hai-Ning Liang}
\IEEEauthorblockA{Department of Computer Science \\and Software Engineering\\
Xi'an Jiaotong-Liverpool University\\
Email:haining.liang@xjtlu.edu.cn}}


%


\maketitle

\begin{abstract}
\textit{Adversarial examples} are augmented data points generated by imperceptible perturbation of input samples. They have recently drawn much attention with the machine learning and data mining community. Being difficult to distinguish from real examples, such adversarial examples could change the prediction of many of the best learning models including the state-of-the-art deep learning models. Recent attempts have been made to build robust models that take into account adversarial examples. However, these methods can either lead to performance drops or lack mathematical motivations. In this paper, we propose a unified framework to build robust machine learning models against adversarial examples. More specifically, using the unified framework, we develop a family of gradient regularization methods that effectively penalize the gradient of loss function w.r.t. inputs. Our proposed framework is appealing in that it offers a unified view to deal with  adversarial examples. It incorporates another recently-proposed perturbation based approach as a special case. In addition, we present some visual effects that reveals semantic meaning in those perturbations, and thus support our regularization method and provide another explanation for generalizability of adversarial examples. By applying this technique to Maxout networks, we conduct a series of experiments and achieve encouraging results on two benchmark datasets. In particular,we attain the best accuracy on MNIST data (without data augmentation)  and competitive performance on CIFAR-10 data.
\end{abstract}

\begin{IEEEkeywords}
Adversarial examples; Deep learning; Regularization; Robust classification
\end{IEEEkeywords}



%
\IEEEpeerreviewmaketitle

\section{Introduction}
Imperceptible perturbations in images are able to change the prediction of neural network models, including single layer softmax model~\cite{intr,goodfellow2014explaining,nguyen2014deep}. That is, given a trained neural network model and an input image, one can always find a small perturbation that can change the model's predication through certain optimization procedure. Moreover, perturbations trained in one model could also change the prediction results of many other classification models. The examples constructed by using such perturbations are referred to as \textit{adversarial examples}, which have attracted much interest in both machine learning and data mining~\cite{goodfellow2014explaining, advlearn,dm}.

The existence of adversarial examples raises several important issues. First, why do such perturbations exist in the first place? Intuitively, a successful machine learning model should robustly classify indistinguishable inputs as the same class. The generalizability  of those perturbations is even more intriguing, since those perturbations are obtained by optimization process based on model and image instance. Second, why do such perturbations not  occur frequently in real applications (i.e., naturally generated data)? Indeed, if most examples are adversarial examples, no machine learning algorithm could work at all. Finally and most importantly, what can we do to deal with adversarial examples? One possibility is to build machine learning models that are immune to adversarial examples. Alternatively, we might be able to use adversarial examples to even improve the performance of most machine learning models.

To explain the existence of adversarial examples,~\cite{goodfellow2014explaining} argued that this particular phenomenon could arise naturally from high dimensional linearity, as opposite to nonlinearity suspected by~\cite{intr}. ~\cite{analysis} showed that models' robustness against adversarial examples are limited by distinguishability between classes. This also supports that nonlinearity is not the fundamental reason behind adversarial examples. They also argued the generalizability of adversarial examples is due to neural network models' resemblance of linear classifiers. In addition to directly addressing the problem,~\cite{nguyen2014deep} have discovered a twin problem: there exist human unrecognizable images, which deep learning models could classify them with high confidence. There is not yet a verifiable answer to why adversarial examples exist only infrequently in real-life cases, but some speculate that they appear only in low probability regions in data manifolds \cite{intr,goodfellow2014explaining}. To alleviate the influence caused by adversarial examples, \cite{gu2014towards} tried to penalize the Jacobian matrix based on  a series of approximations. While their model seems more robust, it usually leads to an accuracy drop and slow training due to additional cost. Accuracy can be improved by injecting those perturbed examples back in training, as demonstrated by~\cite{intr}.~\cite{goodfellow2014explaining} have further extended this idea to the so-called \textit{fast gradient sign method}, and when using this method, trained models are more robust against adversarial examples. While this method seems promising, it appears to lack mathematical motivation and failing to fully utilize the idea of linear perturbation as a consequence. In data mining community, researchers also try to build models that are robust against adversarial examples (adversarial attacks)~\cite{advlearn,dm}. In particular, a study investigated correspondence between adversarial examples and effective regularizer, but their results are restricted merely for some specific loss functions~\cite{advlearn}.

We develop the linear view of adversarial examples proposed by~\cite{goodfellow2014explaining} in a more rigorous and unified way. To this end, we  propose a unified framework to train models that is robust to adversarial examples and successfully transform it to a minmax problem. Specifically, we propose a family of gradient based perturbations, as a unified regularization technique.  Models trained by applying our proposed gradient regularization family have proved highly robust to perturbations. Moreover, under different values of norm parameter, the family presents itself a unified framework, i.e.- it incorporates the recently-proposed~\textit{fast gradient sign method}~\cite{goodfellow2014explaining} as a special case and can also derive many other more promising methods. One interesting special case is verified to achieve encouraging performance in two benchmark data sets. Furthermore, by magnifying perturbations in MNIST~\cite{mnistlecun}, we could provide physical intuition for why adversarial examples could be generalized across different machine learning models.

\section{Gradient Regularization Family}\label{sec:grf}
In this section, we will present our framework of \textit{gradient regularization family} and describe its theoretical properties.

\subsection{General Framework}

We introduce our framework by starting with  the worst-case perturbation, initially proposed in~\cite{goodfellow2014explaining}. One salient feature of our framework is that it can represent a significant extension to the previous work into a unified family that incorporates many important variants of gradient regularization technique.

 Denote $\mathcal{L}(\bm x;\bm{\theta)}$~\footnote{This notation might be shorthanded as $\mathcal{L}(\bm x)$ or $\mathcal{L}$ in the following text.} as a loss function, $\bm x$ as data, and $\bm\theta$ as model parameters. The idea could be formalized as follows. Instead of solving $\min_{\substack{\bm\theta}}\mathcal{L}(\bm x;\bm{\theta)}$, ideally we would like  to solve the following problem  if we try to build a robust model against any small perturbation defined as ${\bm \epsilon}$:
 \begin{eqnarray}\label{eq:orginal}
 \min_{\substack{\bm\theta}}\max_{\substack{\bm{\bm \epsilon}:\norm{\bm{\bm \epsilon}}_p\le \sigma}}\mathcal{L}(\bm {x+{\bm \epsilon}};\bm{\theta)}
 \end{eqnarray}
The norm constraint in the inner problem implies that we only require our model to be robust against certain small perturbation. Thus the training procedure is decomposed into two stages. We first find a perturbation that maximizes the loss given the data and constraints. Then, we perform our ordinary training procedure to minimize the loss function by altering $\theta$. In general, this problem is difficult to be solved due to its non-convex nature with respect to $\epsilon$ and $\theta$.

In the following, we will propose to solve the problem using approximation technique. To this end, we first approximate the loss function by its first order Taylor expansion at point $x$. The inner problem then becomes:
\begin{eqnarray}\label{eq:simplified}
\max_{\substack{\bm{\bm \epsilon}}}\mathcal{L}(\bm x)+\nabla_{\bm x}\mathcal{L}^T\bm{\bm \epsilon} \qquad\mbox{s.t.}\quad\norm{\bm{\bm \epsilon}}_p\le \sigma
\end{eqnarray}
This problem is trivially linear, and hence convex w.r.t. ${\bm \epsilon}$. We can obtain a closed form solution by Lagrangian multiplier method, see~\ref{sec:solv} for details. This yields:
\begin{eqnarray}
 {\bm \epsilon} = \sigma \mbox{ sign} (\nabla\mathcal{L}) (\frac{|\nabla\mathcal{L}|}{\norm{\nabla\mathcal{L}}_{p^*}})^{\frac{1}{p-1}}
 \end{eqnarray}
where $p^*$ is the dual of $p$, i.e., $\frac{1}{p^*}+\frac{1}{p}=1$.

If we substitute the optimal ${\bm \epsilon}$ back to the original optimization problem, we can see that the influence of perturbations can be formulated as a regularization term. Thus, the new family of regularization method works approximately as:
\begin{eqnarray}\label{eq:regularizer}
  \min_{\substack{\bm\theta}}\mathcal{L}(\bm x)+\sigma\norm{\nabla_{\bm x} \mathcal{L}}_{p^*}
\end{eqnarray}
Instead of minimizing $\mathcal{L}(\bm x)$, we try to minimize $\mathcal{L}(\bm x+\bm \epsilon)$ where ${\bm \epsilon}$ is defined above, and parameterized by $p$ such that the new optimization objective could be highly robust to any small perturbations.

\begin{remark}
It is worthwhile to note that although it has been long known in~\cite{Bishop94trainingwith}, that injecting Gaussian noise is equivalent to penalizing the trace of Hessian matrix,  to our best knowledge, our approach is the first general method to penalize the gradient of loss function w.r.t. the input in complex models.  Such regularization could be applied into various machine learning model that contains gradient information.
\end{remark}

In the following, we examine three special cases of this family and show how this unified regularization framework could contain another method~\cite{goodfellow2014explaining}.

\subsection{Case $p = \infty$}

We show that in this case our method can be  reduced to the \textit{fast gradient sign method} proposed in~\cite{goodfellow2014explaining}. The worst perturbation ${\bm \epsilon}$ becomes:
 \begin{align}
 {\bm \epsilon} = & \sigma \lim\limits_{p\rightarrow\infty}\mbox{ sign} (\nabla\mathcal{L}) (\frac{|\nabla\mathcal{L}|}{\norm{\nabla\mathcal{L}}_{p^*}})^{\frac{1}{p-1}}
 \\= & \sigma\mbox{ sign} (\nabla\mathcal{L}) (\frac{|\nabla\mathcal{L}|}{\norm{\nabla\mathcal{L}}_{1}})^{0}
 \\&\mbox{(assuming $|\nabla\mathcal{L}|>0$)}
 \\= & \sigma\mbox{ sign} (\nabla\mathcal{L})
 \end{align}
The corner case, where $|\nabla\mathcal{L}|=0$, defines sign$(0)$ as 0. Therefore, we can  reduce our general method to a special case, namely \textit{fast gradient sign method}. The corresponding induced regularization term is $\sigma \norm{\nabla \mathcal{L}}_1$. Mathematically speaking, this regularization term appears to be unnatural, since the gradient is not penalized in an isotropic way. This might affect negatively the performance, especially when the data is preprocessed to be Gaussian-like.

\subsection{Case $p = 1$}

This is another special case, where $p=1$, and thus the worst perturbation ${\bm \epsilon}$ becomes:
 \begin{align}
 {\bm \epsilon} = & \sigma \lim\limits_{p\rightarrow1}\mbox{ sign} (\nabla\mathcal{L}) (\frac{|\nabla\mathcal{L}|}{\norm{\nabla\mathcal{L}}_{p^*}})^{\frac{1}{p-1}}
 \\= & \sigma\mbox{ sign} (\nabla\mathcal{L}) (\frac{|\nabla\mathcal{L}|}{\norm{\nabla\mathcal{L}}_{\infty}})^{\infty}
 \end{align}
 Therefore,
 \begin{align}
   {\bm \epsilon}_i = \begin{cases}
    \sigma, & \nabla \mathcal{L}_i  = \max_{\substack{j}}\mathcal{L}_j\\
    0, & otherwise \\
   \end{cases}
    \end{align}
 The intuition here is quite clear. Since we are constrained by the sum of absolute value of all perturbations, it is intuitive to put all of our ``budgets'' into one direction. However, the induced regularization term $\norm{\nabla \mathcal{L}}_\infty$ is not very appealing, since it only penalizes gradient in one direction.

\subsection{Case $p =2$}

As \cite{lp} have indicated that the extreme case of hyper parameter $p$ ($p=1$ or $p=\infty$) might not be the optimal setting, we would like to introduce \textit{standard gradient regularization} where $p=2$. Then, we will develop a second order Taylor expansion analysis of this perturbation to provide further theoretical insight into this particular case.
First, let us write down the formula for ${\bm \epsilon}$:
 \begin{align}
 \begin{split}
 {\bm \epsilon} = & \sigma \mbox{ sign} (\nabla\mathcal{L}) (\frac{|\nabla\mathcal{L}|}{\norm{\nabla\mathcal{L}}_{2}})^1
 \\=& \sigma \frac{\nabla\mathcal{L}}{\norm{\nabla\mathcal{L}}_2}
  \end{split}
 \end{align}
We are now ready to compute the second order expansion. The second order Taylor expansion of loss function is:

  \begin{center}
     \begin{align}
     \mathcal{L}(\bm x+\bm{{\bm \epsilon}})\approx& \mathcal{L} + \sigma\norm{\nabla_{\bm x} \mathcal{L}}_{2}+\frac{1}{2}\bm{\bm \epsilon}^T H_{\mathcal{L}}(x)\bm{{\bm \epsilon}}\\
     \approx&\mathcal{L} + \sigma\norm{\nabla_{\bm x} \mathcal{L}}_{2}+\frac{1}{2}\bm{\bm \epsilon}^T J_y(x) H_\mathcal{L}(y)J_y^T(x)\bm{{\bm \epsilon}}
     \\
           \intertext{(by Lemma~\ref{sec:lemma})}
              =&\mathcal{L} + \sigma\norm{\nabla_{\bm x} \mathcal{L}}_{2}+\frac{1}{2}\bm{\bm \epsilon}^T \nabla_x\mathcal{L}\nabla_x\mathcal{L}^T\bm{{\bm \epsilon}} \\=&\mathcal{L} + \sigma\norm{\nabla_{\bm x} \mathcal{L}}_{2}+\frac{\sigma^2}{2}\frac{\nabla\mathcal{L}^T \nabla\mathcal{L}\nabla\mathcal{L}^T\nabla\mathcal{L}}{\norm{\nabla \mathcal{L}}_2^2}
     \\=&\mathcal{L} + \sigma\norm{\nabla_{\bm x} \mathcal{L}}_{2}+\frac{\sigma^2}{2}{\nabla\mathcal{L}^T\nabla\mathcal{L}}
     \\=&\mathcal{L} + \sigma\norm{\nabla_{\bm x} \mathcal{L}}_{2}+\frac{\sigma^2}{2}Tr({\nabla\mathcal{L}\nabla\mathcal{L}^T})
        \\=&\mathcal{L} + \sigma\norm{\nabla_{\bm x} \mathcal{L}}_{2}+\frac{\sigma^2}{2}Tr( J_y(x) H_\mathcal{L}(y)J_y^T(x))  \\
        \intertext{(by Lemma~\ref{sec:lemma})}
     \approx&\mathcal{L} + \sigma\norm{\nabla_{\bm x} \mathcal{L}}_{2}+\frac{\sigma^2}{2}Tr({H_{\mathcal{L}}(x)})
       \end{align}
            \end{center}
               Therefore, the second order Taylor expansion resembled the term induced by marginalizing over Gaussian noise.
               \begin{remark}
               It is possible to write down a generalized form of second order regularization term as $\frac{\sigma^2}{2}\norm{\nabla_{\bm x}\mathcal{L}}_{p^*}^{2}$. In this form, it does not provide further insight, as it appears to be duplication of the first order term in general.
               \end{remark}
                    \subsection{Exact Solution Is Non-Trivial}

                    As mentioned earlier, the original minmax problem~(\ref{eq:orginal}) is generally difficult to solve. However, it would be still of interest to see whether the original minmax problem could be solved without using approximation. It turns out that it is not easy to achieve, even in the simplest case of a linear regression mode. We show this in the following.

                    The optimization problem of finding the worst perturbation in the linear regression model can be formulated as follows:
                    \begin{eqnarray}\max_{\substack{\bm{\bm \epsilon}}}\norm{\bm t - \Theta \bm{(x+\epsilon)} } \qquad\mbox{s.t.}\quad\norm{\bm{\bm \epsilon}}_p\le \sigma
                    \end{eqnarray}
                    where $\bm t$ is the desired output, and $\Theta$ is the weight matrix. Hence, we are maximizing a convex function, which is non-trivial. One might hope that by applying the first order expansion, it could lead to a simple solution. However, while this approximately solves the maximization problem, the minimization problem becomes non-convex due to the additional regularization term. Therefore, it appears to be hard to obtain exact solution in this very simple model.
                    \subsection{Computational Cost}
                    The computational cost of injecting such adversarial noise comes from computing $\epsilon$, which requires computing $\nabla_x \mathcal{L}$ and a minimum overhead to convert $\nabla \mathcal{L}$ into $\epsilon$. A naive BP approach to compute $\epsilon$ will roughly double the time cost of the training. We implemented the algorithm using built in functionality provided by~\cite{theano}.
                   \vspace{-0.2cm}
\section{Visualization and Interpretation of Gradient Perturbation}\label{sec:Q2}
Traditionally, adversarial examples and corresponding perturbations are regarded as unintelligible to naked eyes~\cite{intr,goodfellow2014explaining}. However, we will visualize our gradient perturbations in the case of $p=2$, and thus provide both physical intuition of gradient perturbation and mathematical structure behind. The physical intuition, in particular, could support the effectiveness of gradient perturbation, and explain why adversarial examples could generalize across models. The model used to generate sample images in this section is a sigmoid network model trained on the MNIST dataset~\cite{mnistlecun}.
\subsection{Visualizing Adversarial examples}
It has been demonstrated that adversarial examples are generated by imperceptible or unrecognizable perturbations~\cite{intr,goodfellow2014explaining}. However, we will show that in some cases we could visualize those perturbations. Let us present randomly picked samples from MNIST in Figure~\ref{original} and the perturbed images by gradient perturbation with $p=2$ and $\sigma = 1$ in Figure~\ref{corrupted}.

\begin{figure}[!t]
\vskip 0.2in
\begin{center}
\includegraphics[scale=.5]{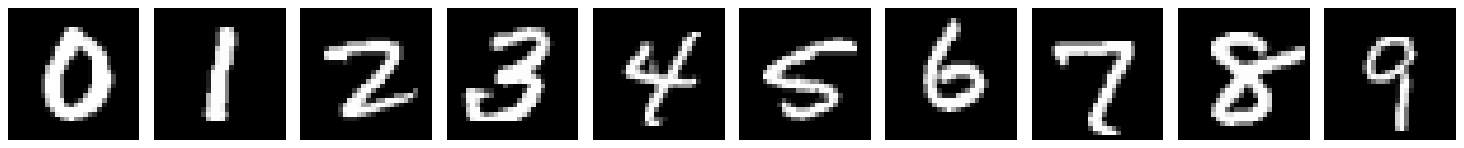}
\caption{Original input images from MNIST dataset}\label{original}
\end{center}
\begin{center}
\includegraphics[scale=.5]{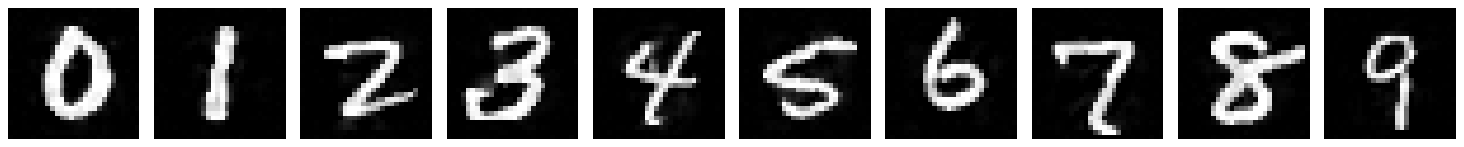}

\caption{Inputs perturbed by gradient perturbation with $p=2$ and $\sigma = 1$}\label{corrupted}
\end{center}
\vskip -0.2in
\end{figure}

Indeed, at the first glance, we may conclude that the perturbed examples are indistinguishable from the original copies. However, let us take a closer look at the perturbations in Figure~\ref{fig_pertu10_1} and Figure~\ref{fig_pertu10_2} by magnifying the perturbation with a factor of 10:
\begin{figure}[!t]
\vskip 0.2in
\begin{center}
\includegraphics[trim = 0 130 0 130,clip,scale=.5]{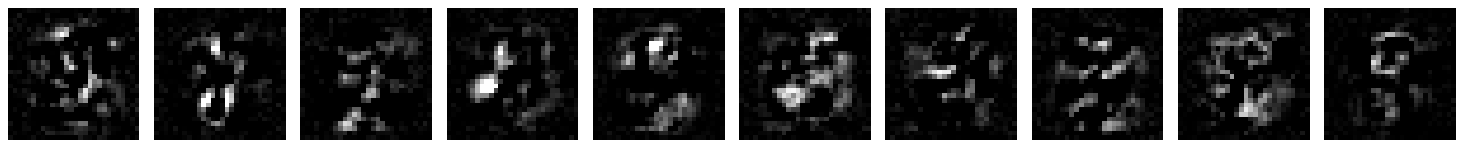}
\caption{The corresponding perturbations magnified by a factor of 10}\label{fig_pertu10_1}
\end{center}

\begin{center}
\includegraphics[scale=.5]{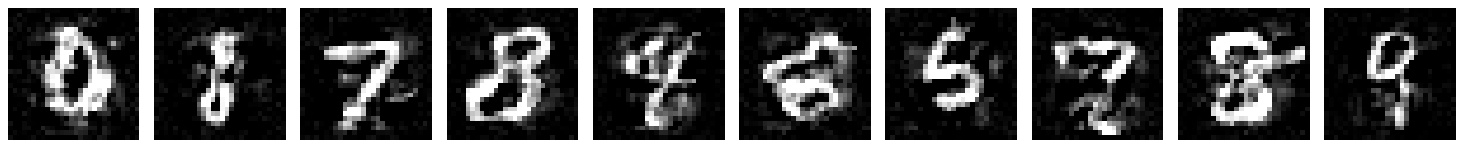}
\caption{Inputs perturbed by gradient perturbation with $p=2$ and $\sigma = 10$}\label{fig_pertu10_2}
\end{center}
\vskip -0.2in
\end{figure}
Although the magnified perturbations do not still make much sense to naked human eye, the exaggerated adversarial examples are presented in a more meaningful way for the naked eye (see Figure~\ref{fig_pertu10_2}). The perturbations have changed the physical shapes of the objects in a perceptible way. The number $2,3,6$ are of special interest. The bottom right corner of $2$ has been erased and turned into $7$; $3$ is extended into number $8$; the $6$ is the most interesting, where a part is erased while another part is extended to become a $5$.
\subsection{Mathematical Structure}
In this subsection, we will try to interpret the above phenomenon mathematically. To understand how those adversarial examples are generated, let us decompose the gradient perturbation $\bm{\bm \epsilon}$ when $p=2$:
\begin{align}
\bm{\bm \epsilon} = &\sigma  \frac{{\nabla_x\mathcal{L}}}{\norm{{\nabla_x\mathcal{L}}} _{2}}
\\= &\frac{\sigma }{\norm{{\nabla_x\mathcal{L}}} _{2}} {\nabla_x\mathcal{L}}
\\= &\frac{\sigma }{\norm{{\nabla_x\mathcal{L}}} _{2}} {\nabla_y\mathcal{L}}J_y(x)
\\= &\frac{\sigma }{\norm{{\nabla_x\mathcal{L}}} _{2}} \bm{(y-t)}J_o(x)
\end{align}
and where $\bm o$ is the input of the softmax layer for a typical neural network, and $\bm t$ is the true label. Next, we need to make the following assumption: the Jacobian matrix encodes the local version of corresponding classes. Then, when the prediction is concentrated on the correct output,  $(\bm{y-t})$ should be small and negative. This corresponds to the cases like $0,1,$ and $9$ in the previous section, where the perturbed examples are still clear and correct. In the case of low predication confidence, fuzzy perturbed examples could be generated. This corresponds to the cases like $4,5,7$, and $8$. The most interesting cases happen when there is a confusion between two classes. That is to say, $(\bm{y-t})$ contains two large components where one component is positive and one negative. The negative component corresponds to weak prediction of correct label, and the positive component corresponds to most confused wrong prediction. Images like $2,3$, and $6$ in the previous section are somewhat confusing when compared to the other numbers. In such cases, an adversarial perturbation erases the correct objects and injects the confused one.
\begin{remark}
In the above argument, we made the assumption of the Jacobian's ability to model the local space. This assumption is reasonable, since neurons in several layers of convolutional neural network correspond to certain natural image statistics~\cite{visualizing}. While such assumption might not hold strictly in more complex data or data in other domains, there are researches indicate that the encoding of such information exist~\cite{invert}.
\end{remark}

The above finding suggests that gradient perturbation method works in a distinctive way from techniques like weight decay or weight constraint method. It appears that gradient perturbation adds regularization effects conditioned on the performance of model in a particular instance. Therefore, it will not over-regularize when unnecessary. This interpretation is consistent with arguments made in~\cite{goodfellow2014explaining}.

\subsection{Generalization of Adversarial Examples}
The visualization tells us that image instances are actually morphed into other classes in a very minimal form. Our machine learning models are much more perceptive than people in detecting such minor perturbations. Thus, the ability of adversarial examples to generalize across different models might come from meaningful changes in examples. A small change towards a specific class will still be a change even added in another instance.

This explanation is also in coherent with~\cite{goodfellow2014explaining}. In their framework, we can say that the generalization of adversarial examples is due to the models' resemblance of linear classifier, and linear classifier could encode local semantic information of images.

\vspace{-0.2cm}
\section{Experiments}\label{sec:exp}
\vspace{-0.2cm}
In this section, we will apply our proposed gradient perturbation method on standard benchmarks including MNIST~\cite{mnistlecun} and CIFAR-10~\cite{cifar}. In addition, we will test whether gradient regularization could improve the models' robustness. We will mainly test the case for $p=2$, which we refer to as \textit{standard gradient regularization}.

\subsection{MNIST}
The MNIST dataset \cite{mnistlecun} consists of 60,000 training examples and 10,000 testing examples. Each single example consists of  $28\times28$ greyscale images, which corresponds to digits from 0 to 9. We rescale the inputs into the range of $[0,1]^{784}$, and no further preprocessing is applied. We tested \textit{standard perturbation regularization} on three architectures: standard sigmoid multilayer perceptron (MLP), Maxout network, and Convolutional Maxout network~\cite{maxout}. The sigmoid MLP experiments are conducted to investigate the utility of our method on highly non-linear models, and the other two experiments are designed to further verify whether our proposed framework could obtain the state-of-the-art performance.

The sigmoid MLP has two hidden layers, and its parameters are chosen from [400,400], [600,600] and [800,800] hidden units based on validation. Only norm constraint is used as other means for regularization. The max norm is set to be the square root of 15 as in~\cite{dropout}. The regularization parameter $\sigma$ is chosen from 0.1, 0.5, 1, and 2. The final architecture used is two layers of $600$ sigmoid units, and a softmax final layer. $\sigma$ is set to be 1. To fully utilize the training set, we followed~\cite{maxout} to train the model on 50,000 examples first, and we record the loss in the training set. Next, we train with a total of 60,000 examples until the last 10,0000 examples hit the same loss.

For the experiments based on Maxout networks, we have engaged the model from Maxout paper~\cite{maxout} and applied gradient regularization where $\sigma=1$ as we have found from experiments on sigmoid MLP. No other parameter setting is tested. In addition, we tested the case of $p = \infty$ for the Convolutional Maxout network, since it is not reported by~\cite{goodfellow2014explaining}. We have found that setting $\sigma = 0.1$ leads to better results than $\sigma = 0.25$, which is the best parameter for the non-convolutional Maxout network according to~\cite{goodfellow2014explaining}.

\begin{table}[!t]
\caption{Testing errors on permutation invariant MNIST dataset without data augmentation.}
\label{table-1}
\vskip 0.15in
\begin{center}
\begin{small}
\begin{tabular}{lcccr}
\hline
Models & Test Error  \\
\hline
Maxout + dropout & 0.94 $\%$\\
\cite{maxout}\\
Sigmoid mlp + gradient $p=2$ & 0.93 $\%$ \\
ReLu + dropout +gaussian & 0.85 $\%$ \\
\cite{noise}\\
Maxout + dropout + gradient $p=\infty$ & 0.84 $\%$\tablefootnote {It is recently improved to 0.78$\%$ by applying adversarial idea on early stopping, and changing architecture.}\\
\cite{goodfellow2014explaining}  \\
DBM + dropout &{0.79} $\%$\\
\cite{improve}\\
\textbf{Maxout + dropout + gradient p = 2 }&\textbf {0.78 $\%$}\\
\hline
\end{tabular}
\end{small}
      \end{center}
\vskip -0.1in
\end{table}

In the permutation invariant version, we achieved 78 errors among 10, 000 test samples in the case of $p=2$. \textbf{To our best knowledge, this is the best result in this category even with unsupervised pretraining if data are not augmented}. Also, gradient regularized old fashioned sigmoid MLP has been shown to achieve 93 errors, which actually beats simple Maxout networks with dropout training. This indicates that even a highly nonlinear model such as sigmoid MLP could be linear enough locally to benefit from gradient regularization. We summarized our results and other related best results in Table~\ref{table-1}.

\begin{table}[!t]
\caption{Testing errors on MNIST dataset without data augmentation. Convolutional architecture is used.}
\label{table-2}
\vskip 0.15in
\begin{center}
\begin{small}
\begin{tabular}{lcccr}
\hline
Models & Test Error  \\
\hline
Conv. Maxout + dropout & 0.45 $\%$\\
\cite{maxout}\\
Conv. Maxout + dropout& 0.41 $\%$\\
 + gradient $p=\infty$\\
\textbf{Conv. Maxout + dropout }& \textbf{0.39 $\%$} \\
\textbf{+ gradient p = 2 }\\
\textbf{Conv. Kernel Network} &\textbf{0.39 $\%$}\\
\textbf{\cite{kernel}}\\
\hline
\end{tabular}
\end{small}
\end{center}
\vskip -0.1in
\end{table}
In terms of the improvements on convolutional architecture, our best result is obtained by standard gradient regularization. We have achieved 39 errors on testing set, which ties with the recently proposed sophisticated convolutional kernel network described by~\cite{kernel}. To our best knowledge, no better results are obtained without using of data augmentation like elastic distortion. It appears that standard gradient regularization outperforms fast gradient sign method in a consistent fashion, especially if we consider that the $\sigma$ parameter is not tuned for Maxout network. We summarized the results in Table \ref{table-2}.

\subsection{CIFAR-10}
The CIFAR-10 dataset~\cite{cifar} consists of 60,000 32 $\times$ 32 RGB images, corresponding to 10 categories of objects. There are 50,000 training examples and 10,000 testing examples. Again, we have followed the procedure in the Maxout paper~\cite{maxout}. To quickly verify gradient regularization in a more complex setting, we tested on a small convolutional maxout network reported by~\cite{pylearn2}, and can confirm again the improvement of our regularization technique. $\sigma$ was set to be $1\times \sqrt{\frac{32^2\times3}{28^2}}\approx 2$ in this case, since the RBG image has more dimensions. Similar to MNIST, the training has two stages, we compared the test error in the end of both stages.
\begin{table}[!t]
\caption{Testing errors on CIFAR-10 dataset without data augmentation.}
\label{table-3}
\vskip 0.15in
\begin{center}
\begin{small}
\begin{tabular}{lcccr}
\hline
Models & Testing Error \\
\hline
Conv. Maxout Small + dropout & 14.05 $\%$\\
\cite{pylearn2} (40K)\\
{Conv. Maxout Small + dropout} &{13.26 $\%$}\\
{+Gradient p = 2 (40K)}\\
Conv. Maxout Small + dropout & 12.93 $\%$\\
\cite{pylearn2} (50K)\\
\textbf{Conv. Maxout Small + dropout }& \textbf{12.28 $\%$}\\
\textbf{+Gradient p = 2 (50K)}\\
\hline
\end{tabular}
\end{small}
\end{center}
\vskip -0.1in
\end{table}
As can be seen from Table \ref{table-3}, our gradient regularization technique improves the performance of this network.

\subsection{Robustness}
A recent theoretical study suggests there is a relationship between adversarial examples and Gaussian noise~\cite{analysis}, we have used testing errors under Gaussian noise as a criterion to test whether our regularization technique improves the models' robustness against perturbations. We will also discuss the relationship in ~\ref{sec:Q3}. We gather some data based on testing errors on MNIST.

\begin{table}[!t]
\caption{Testing errors of MNIST dataset under Gaussian noises}
\label{table-4}
\vskip 0.15in
\begin{center}
\begin{small}
\begin{tabular}{lcccr}
\hline
\backslashbox{Models}{Gaussian std} & 0 & 0.1 & 0.3 \\
\hline
Conv. Maxout + dropout & 0.45 $\%$ & 0.76 $\%$ & 20.17 $\%$\\
Maxout + dropout  & 0.94 $\%$ & 1.14 $\%$ & 2.42 $\%$\\
\hline\\
Conv. Maxout + dropout& 0.39 $\%$ & 0.44 $\%$ & 14.36 $\%$\\
 +Gradient p = 2\\
Maxout + dropout & 0.78 $\%$ & 0.83 $\%$ & 1.29 $\%$\\
 +Gradient p = 2\\
\hline
\end{tabular}
\end{small}
\end{center}
\end{table}
\begin{table}
\vskip -0.1in
\caption{Testing errors of CIFAR dataset under Gaussian noises}
\label{table-5}
\vskip 0.15in
\begin{center}
\begin{small}
\begin{tabular}{lcccr}
\hline
\backslashbox{Models}{Gaussian std} & 0 & 0.1 & 0.3 \\
\hline
Conv. Maxout + dropout & 13.08 $\%$ & 13.37 $\%$ & 16.88 $\%$\\
\hline\\
Conv. Maxout + dropout& 12.28 $\%$ & 12.81 $\%$ & 16.80 $\%$\\
 +Gradient p = 2\\
\hline
\end{tabular}
\end{small}
\end{center}
\vskip -0.1in
\end{table}
As can be seen from Table~\ref{table-4} and~\ref{table-5}, our regularization technique indeed improves the models' ability to resist perturbation. However, it is interesting to note that the convolutional architecture seems to suffer more severe than the standard Maxout network. This observation is consistent with the data gathered from the ImageNet data set~\cite{intr}, where researchers have shown that the operator norm is larger in the convolutional architecture than fully connected networks. This is however yet another indication that the randomness in natural images is quite small even compare to those so called imperceptible perturbations.

\section{Discussion}\label{sec:Q3}

In this section, we will show how to exploit the proposed unified theory and mainly discuss why adversarial examples do not occur frequently in real data. Some of the following analysis may not be strict enough but sufficient to provide intuition and interpretations on certain properties of  adversarial examples.

By assuming the perturbation is caused by the Gaussian noise, we will first analyze how the misclassification rates of various learning models could be related to the minimum perturbation. Here, the minimum perturbation is measured in terms of its perturbation size that is able to change the models' prediction on a certain example.  After the establishment of the relationship, we will be able to predict the probability or the frequency that adversarial examples occur in real data. In order to achieve these objectives, we will develop a mathematical model by following  two fundamental assumptions made in~\cite{goodfellow2014explaining}: the manifold where adversarial examples live is not special and they mainly arise from local linear behavior.

More precisely, we assume that every data point is associated with a Gaussian distribution:
$(\bm x+\bm\eta)\sim \mathcal{N} (\bm x,\sigma^2\bm I_d)$
where $\sigma$ is the standard deviation in every dimension, and $\bm I_d$ is an identity matrix of size $d\times d$. Thus, adversarial examples do not live in a region with particularly low probability as contrary to suggestion from~\cite{intr}. There are $n$ directions to generate effective adversarial examples, whose corresponding minimum perturbations are respectively denoted as $\bm a_i$, and direction $\bm e_i=\frac{\bm a_i}{\norm{\bm a_i}}$ for $i=1,2,\ldots,n$. In order to generate an adversarial example from $\bm\eta$, we need to satisfy:
\begin{eqnarray}\exists i, \quad\mbox{s.t.}\quad \bm\eta^T\bm e_i\geq  \bm a_i^T \bm e_i
\end{eqnarray}
Denote adversarial examples to occur as $A$, then we could obtain a lower bound:\begin{eqnarray}P(A)\leq \sum_{i=1}^n P(\bm\eta^T\bm e_i\geq  \bm a_i^T \bm e_i)
\end{eqnarray}
 Since the Gaussian distribution is isotropic, $P(\bm\eta^T\bm e_i\geq  \bm a_i^T \bm e_i)$ could be simplified into:
  \begin{eqnarray}\label{eq:p}
  P(\bm\eta_0\geq  \norm{\bm a_i})
  \end{eqnarray}
  and where $\bm\eta_0$ means $\bm \eta $ in an arbitrary axis. This essentially means that given $\norm{\bm a_i}$, it will not increase the probability of adversarial examples to occur by increasing dimension of input space. To calculate the probability, we only need $\norm{\bm a_i}$. To further simplify our formula, we assume all $\norm{\bm a_i}$'s are equal, and just denote each $\bm a_i$ as $\bm a$. Hence, there are three parameters in our model: $n,\norm{\bm a},\sigma$, among which the size of $\sigma$ is controlled.

  We could develop some qualitative intuitions first. To make a fair comparison, we set that the mean distortion of adversarial examples, as defined by \cite{intr}, equal to the variance of the Gaussian distribution. In other words, we set $\sigma^2=\frac{\norm{\bm a}_2^2}{d}$, so that our model only depends on the number of directions $n$ and the dimension of input $d$. Then, based on Equation~(\ref{eq:p}), we have:
  	\begin{eqnarray}P(A)\leq n P(\frac{\bm\eta_0}{\sigma}\geq \sqrt d)
  	\end{eqnarray}
  	 Since the event denotes $\bm\eta_0$ to be $\sqrt d$ standard deviation away from the mean, the probability of the event to occur shrinks exponentially. Therefore, while high dimensionality makes it possible to find a direction such that the model prediction would be changed, it also provides protection in a probabilistic sense that such prediction change could be unlikely.

  In the next subsection, we will concretely take the MNIST data as one real example to make further analysis. The image pixels used in the following analysis are all scaled into [0,1]. Following~\cite{intr}, we will conduct our analysis based on the simple softmax model, which is of linear nature.
  
\subsection{Estimating Misclassification Rate under Gaussian Noise}
This subsection mainly aims at verifying our linear analyzing framework and gathering insights from empirical observations. To get a reasonable numerical estimation of misclassification rate, it is not enough to use the average of minimum perturbation $\norm{a}_2$, instead we have to take the distribution of minimum perturbation into consideration. There are two reasons for this: (1) the minimum perturbations of a small value have a major impact on the probability that adversarial examples occur, and (2) the minimum perturbation skews towards small values as seen in Figure~\ref{fig:mini}.
\begin{figure}[!t]
\vskip 0.2in
\begin{center}
\includegraphics[scale=.3]{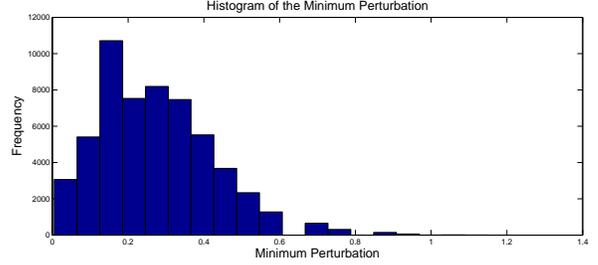}
\caption{Histogram of minimum perturbation in a single layer softmax model with regularization parameter $\lambda=1e-4$. Histograms of other regularization parameters are of the same shape but different in scale}\label{fig:mini}
\end{center}
\end{figure}
We approximate the distribution  of $\norm{\bm a}$ by a Gaussian distribution $\mathcal{N}_a (\mu_a,\sigma_a^2 )$. We will then have:
\begin{align}
P(A)\leq & n P(\bm\eta_0\geq{\norm{\bm a}})
\\ = & n P(\bm\eta_0-{\norm{\bm a}}\geq 0)
\end{align}
The summation of two Gaussian random variables, denoted by $\delta \equiv \bm\eta_0-{\norm{\bm a}}$, is  Gaussian, whose mean and variance are $-\mu_a$ and $\sigma^2+\sigma_a^2$ respectively. Then, we have the probability related to a cumulative distribution function of a Gaussian variable:
\begin{eqnarray}P(A) \leq   n P(\delta \geq 0)
\end{eqnarray}
If we further assume the smallest perturbation in one certain direction dominates the predicated probability, we can roughly set $n=1$.\footnote{Such assumption may be less strict, which is however simple and useful to get clear insight into the theory.} Then, the remaining task is to get an estimation of $\mu_a$ and $\sigma_a^2 $. Following the linear view of adversarial examples, we perform a line-search in the direction of gradient $\nabla_{\bm x}\mathcal{L}$ to find the minimum perturbation needed to change the prediction from correct to wrong. We have gathered the statistics from softmax models, and summarized the results in Table~\ref{table:mni}. Note that these statistics are obtained under the training set (with sufficiently large number of samples) so as to get a statistically reliable conclusion. Similar inspection can be derived if the test data were used.

\begin{table}[!t]
\caption{Mean and standard deviation of the minimum perturbations for softmax models on MNIST training dataset}
\label{table:mni}
\vskip 0.15in
\begin{center}
\begin{small}
\begin{tabular}{lccr}
\hline
Models & mean & standard deviation \\
\hline
Softmax $\lambda =1e-4$& 0.2744  & 0.1511 \\
Softmax $\lambda =1e-2$& 0.4722  & 0.2531 \\
Softmax $\lambda =1$& 1.2718 & 0.6654 \\
\hline
\end{tabular}
\end{small}
\end{center}
\vskip -0.1in
\end{table}

As can be seen from Table~\ref{table:mni}, a larger regularization parameter corresponds to better robustness. To estimate the misclassification rate under different levels of Gaussian noises, we first need to obtain the misclassification rate without noise. With the further assumption that misclassification is not corrected by noise, the predicated misclassification rate  can be calculated theoretically as:
 \begin{eqnarray}
 P(miss)+(1-P(miss))\times P(\delta \geq 0)\label{missrate}
\end{eqnarray}
Given the perturbation under the Gaussian noise, we can get the actual misclassification rates in MNIST and also estimate such rate theoretically using Equation~(\ref{missrate}).  These results are listed in Table~\ref{table:noise} and Table~\ref{table:pre} respectively.

\begin{table}[!t]
\caption{Acutal misclassification rate on MNIST under the perturbation of Gaussian noise}
\label{table:noise}
\vskip 0.15in
\begin{center}
\begin{small}
\begin{tabular}{lcccr}
\hline
\backslashbox{Models}{Gaussian std} & 0 & 0.1 & 0.3 \\
\hline
Softmax $\lambda =1e-4$& 6.02  $\%$ & 11.99 $\%$ &  40.07  $\%$\\
Softmax $\lambda =1e-4$& 6.29 $\%$ & 8.22 $\%$ & 22.44 $\%$\\
Softmax $\lambda =1e-4$& 8.93 $\%$ & 9.16 $\%$ & 11.33 $\%$\\
\hline
\end{tabular}
\end{small}
\end{center}
\vskip -0.1in
\end{table}
\begin{table}[!t]
\caption{Misclassification rate theoretically estimated by (\ref{missrate}) on MNIST  under the perturbation of  Gaussian noise}
\label{table:pre}
\vskip 0.15in
\begin{center}
\begin{small}
\begin{tabular}{lcccr}
\hline
\backslashbox{Models}{Gaussian std} & 0 & 0.1 & 0.3 \\
\hline
Softmax $\lambda =1e-4$& 6.02  $\%$ &12.12 $\%$ &  25.47  $\%$\\
Softmax $\lambda =1e-2$& 6.29 $\%$ & 10.16 $\%$ & 17.01 $\%$\\
Softmax $\lambda =1$& 8.93 $\%$ & 11.61 $\%$ & 12.63 $\%$\\
\hline
\end{tabular}
\end{small}
\end{center}
\vskip -0.1in
\end{table}
As can be seen by comparing Table~\ref{table:noise} and Table~\ref{table:pre}, the  misclassification rate estimated from our probability model fits well the actual rate as obtained from empirical experiments. Although these numerical results do not match exactly as seen in Table~\ref{table:noise} and Table~\ref{table:pre}, the qualitative changes of the predications and actual results are fairly consistent under different levels of Gaussian noises.

Note that, within the linear view of adversarial examples, we have made two major assumptions in the previous analysis. First, we adopt the most dominant direction to estimate the misclassification rate. This may lead to underestimation of misclassification, since other directions are neglected. Second, we approximate the distribution of minimum perturbation by a Gaussian distribution. This may lead to overestimation of misclassification rate in the case of no noise. Table~\ref{table:mni} indicates that standard deviation of minimum is roughly half of the mean in a consistent fashion. Then, even without Gaussian noise, the calculation yields roughly $P(A)\approx3 \%$. The reason behind this is that the actual distribution is only supported in the positive region, but Gaussian distribution is not. This simplification does not have a major impact when $\sigma$ is large, and our results agree with this. However, to estimate $P(A)$ when $\sigma$ is small, we may need to readjust our approximation.

\subsection{Estimating the Probability of Adversarial Examples}
To estimate the probability that adversarial examples occur in reality, we need to estimate the randomness in natural images and the distribution of the minimum perturbations. This is especially necessary when the perturbation is sufficiently small, i.e., close to $0$, since the overestimation occurs in this case as indicated in the previous subsection. In the following, we intend to choose a $\sigma$ that is sufficiently large, yet is hardly perceptible. We first illustrate some examples from MNIST dataset in Figure~\ref{fig_001}, where $\sigma=0.01$.

\begin{figure}[!t]
\vskip 0.2in
\begin{center}
\includegraphics[scale=.5]{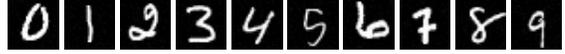}
\caption{Images corrupted by Gaussian noise with $\sigma=0.01$}\label{fig_001}
\end{center}

\vskip -0.2in
\end{figure}
In Figure~\ref{fig_001} , such noise is imperceptible even with a close examination, and this is not the case for significantly larger $\sigma$. We observe that the distribution of the minimum perturbation appears to be linear when it is close to zero as seen in Figure~\ref{fig:around0}.\footnote{The space in the figure is caused by steps in line search.}
\begin{figure}[!t]
\vskip 0.2in
\begin{center}
\includegraphics[scale=.3]{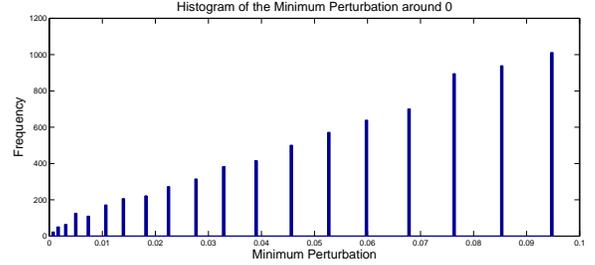}
\caption{Histogram of minimum perturbation in a single layer softmax model with regularization parameter $\lambda=1e-4$. Region beyond 0.1 is filtered.}\label{fig:around0}
\end{center}
\vskip -0.2in
\end{figure}
 In light of this, we could assume the probability density is linear around $0$ with respect to the minimum perturbation. After gathering statistics from the graph and some algebras, we can predicate the increase of misclassification rate in theory (based on our model) for a perturbation caused by Gaussian noise with $\sigma=0.01$. The percentage of such additional  misclassification  is calculated theoretically and listed in Table~\ref{table:ad}. We also list the actual additional misclassification, which was obtained empirically from experiments.
\begin{table}[!t]
\caption{Actual and predicated percentage of additional misclassification rate caused by the perturbation of Gaussian noise with $\sigma=0.01$ on MNIST}
\label{table:ad}
\vskip 0.15in
\begin{center}
\begin{small}
\begin{tabular}{lccr}
\hline
Models & Actual & Predicated \\
\hline
Softmax $\lambda =1e-4$& 0.07  $\%$ & 0.04 $\%$ \\
Softmax $\lambda =1e-2$& 0.01 $\%$ &  0.02 $\%$ \\
Softmax $\lambda =1$& 0.01 $\%$ & 0.01 $\%$ \\
\hline
\end{tabular}
\end{small}
\end{center}
\vskip -0.1in
\end{table}
Note that, according to the above analysis and assumptions, these additional misclassified samples can be regarded as adversarial examples that occur naturally, when a perturbation is given by Gaussian noise. As seen in Table~\ref{table:ad}, these numbers  are of sufficiently small both theoretically (under column ``Predicated) and empirically (under column ``Actual"), meaning that the probability of adversarial examples is quite small. Therefore, the claim that adversarial examples could endanger our machine learning models seems to somewhat over cautious, since its natural occurrence is supposed to be rare and negligible comparing to classification errors of our models. This is both verified theoretically and empirically in the above analysis.

\section{Conclusion}
In this paper, we have proposed a unified framework, which aims at building robust models against adversarial examples. We have derived a family of gradient regularization techniques, which could be extensively used in various machine learning models.  In particular, a special case has a second order interpretation, which approximately marginalized over Gaussian noise. We explained the adversarial examples' ability to generalize across models by visualizing its effects, which had been previously thought to be unrecognizable. This visualization, in turn, supplement our regularization technique with physical meaning.

While the linear view appears to be plausible and fruitful, so far, it needs more empirical support. To be more concrete, manifold hypothesis suggests data residue in a low dimensional manifold in the space. This, however, suggests that it is always possible to leave the manifold by slight change, since there are extra dimension around every point in the manifold, and the fact supports the hypothesis. Theoretically, a low dimensional manifold has measure of zero. This makes the number analogy made by~\cite{goodfellow2014explaining} almost literally true, but with adversarial examples being irrational numbers. In reality, however, data float around the manifold. Thus, geometrically, we could check the implication of linear view by associating errors induced by Gaussian noises and the minimum perturbation length. Our primary work suggests that it would require distribution of minimum perturbation length to account for the actual errors.


\section*{Acknowledgment}

The authors would like to thank the developer of Deeplearning Toolbox~\cite{deep}, the developers of Theano~\cite{theano,Theano-2012} and the developers of Pylearn2~\cite{pylearn2}.



%
\appendices
\section{Solving $\bm \epsilon$}\label{sec:solv}
Since $\mathcal{L}(\bm x)$ is independent of $\bm\epsilon$. Our problem is \begin{equation}
\max_{\substack{\bm{\bm \epsilon}}}\nabla_{\bm x}\mathcal{L}^T\bm{\bm \epsilon} \qquad\mbox{s.t.}\quad\norm{\bm{\bm \epsilon}}_p\le \sigma
\end{equation}Clearly, the optimal $\bm\epsilon$ would have a norm of $\sigma$, otherwise, we can normalize $\bm\epsilon$ to get a greater loss. Therefore, we are set to solve \begin{equation}\max_{\substack{\bm{\bm \epsilon}}}\nabla_{\bm x}\mathcal{L}^T\bm{\bm \epsilon} \qquad\mbox{s.t.}\quad\norm{\bm{\bm \epsilon}}_p= \sigma\end{equation}
This could be solved by standard Lagrangian multiplier method, where $f(\bm\epsilon)\equiv\nabla_{\bm x}\mathcal{L}^T\bm{\bm \epsilon}$, and $g(\bm\epsilon)\equiv\norm{\bm{\bm \epsilon}}_p= \sigma$. Set $\nabla f(\bm\epsilon)= \lambda\nabla g(\bm\epsilon)$, we have
 \begin{align}
  \nabla f(\bm\epsilon)&= \lambda\nabla g(\bm\epsilon)  \\
    \nabla_{\bm x}\mathcal{L}&=\lambda\frac{\bm\epsilon^{p-1}}{p(\sum_i \bm\epsilon_i^p)^{1-\frac{1}{p}}}\\ 
    \nabla_{\bm x}\mathcal{L}&=\frac{\lambda}{p}(\frac{\bm\epsilon}{\sigma})^{p-1}\label{epsilon}\\
      \nabla_{\bm x}\mathcal{L}^{\frac{p}{p-1}}&=(\frac{\lambda}{p})^{\frac{p}{p-1}}(\frac{\bm\epsilon}{\sigma})^{p}\\
          \end{align}
      Sum over two sides
       \begin{align}
         \sum   \nabla_{\bm x}\mathcal{L}^{\frac{p}{p-1}}&=\sum(\frac{\lambda}{p})^{\frac{p}{p-1}}(\frac{\bm\epsilon}{\sigma})^{p}\\
                      \norm{\nabla\mathcal{L}}_{p^*}^ {p^*} &=(\frac{\lambda}{p})^{p^*}*1\\
               (\frac{\lambda}{p})&=\norm{\nabla\mathcal{L}}_{p^*}\label{lp}
    \end{align}
    Combine~\eqref{epsilon} and~\eqref{lp}, it is easy to see
          \begin{equation}
     {\bm \epsilon} = \sigma \mbox{ sign} (\nabla\mathcal{L}) (\frac{|\nabla\mathcal{L}|}{\norm{\nabla\mathcal{L}}_{p^*}})^{\frac{1}{p-1}}
          \end{equation}
\section{Lemma}
Let vector $y$ be the output of model, i.e. $y= y(x)$. Correspondingly, we have vector $t$ as the desired output. We have $N$ as the length of $ y$, where $N$ is the number of classes in a classification task. $ GN(x)\equiv J_{ y}(x) H_\mathcal{L}(y)J_y^T(x)$ is Gaussian-Newton matrix, which is an approximation of Hessian matrix of $\mathcal{L}$ w.r.t. $x$~\cite{GN}. The $J_y(x)$ is the Jacobian matrix of $ y$ w.r.t. $  x$, and $H_\mathcal{L}(y)$ is the Hessian matrix of $\mathcal{L}$ w.r.t. $ y$.
 \begin{lemma}
 \label{sec:lemma}
 Given $ \mathcal{L}=-\sum_{i=1}^N t_i\log (y_i)$, and  $ \exists l\in 1,2,,\ldots ,N $ s.t. $ \forall i \in 1,2,,\ldots ,N, i=\delta_{li}$. We have $\nabla_x\mathcal{L}\nabla_x\mathcal{L}^T= J_y(x) H_\mathcal{L}(y)J_y^T(x)$.
 \end{lemma}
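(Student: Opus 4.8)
The plan is to peel off the Jacobian factors that are common to both sides and reduce the claimed identity to a statement about the loss purely as a function of the output vector $y$, and then to verify that reduced statement by a one-line differentiation that exploits the one-hot structure of $t$.

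First I would apply the chain rule in the form $\nabla_x\mathcal{L}=J_y(x)\,\nabla_y\mathcal{L}$ (using, as in Section~\ref{sec:grf} and in the definition of $GN(x)$ in the statement, the convention under which $J_y(x)$ is the $d\times N$ matrix with entries $\partial y_j/\partial x_i$, so that $J_y(x)H_\mathcal{L}(y)J_y^T(x)$ is the $d\times d$ Gauss--Newton matrix). Taking the outer product of this identity with itself gives
\begin{equation}
\nabla_x\mathcal{L}\,\nabla_x\mathcal{L}^T \;=\; J_y(x)\bigl(\nabla_y\mathcal{L}\,\nabla_y\mathcal{L}^T\bigr)J_y^T(x).
\end{equation}
Comparing with the right-hand side of the lemma, it therefore suffices to prove the ``inner'' identity $\nabla_y\mathcal{L}\,\nabla_y\mathcal{L}^T = H_\mathcal{L}(y)$; the lemma then follows by sandwiching both sides between $J_y(x)$ and $J_y^T(x)$.

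Next I would compute the two $N\times N$ matrices appearing in the inner identity. Since $t_i=\delta_{li}$, the cross-entropy collapses to $\mathcal{L}=-\log y_l$, a function of the single coordinate $y_l$. Hence $\partial\mathcal{L}/\partial y_i=-\delta_{li}/y_l$, so $\nabla_y\mathcal{L}=-y_l^{-1}e_l$ (with $e_l$ the $l$-th standard basis vector) and $\nabla_y\mathcal{L}\,\nabla_y\mathcal{L}^T$ has the single nonzero entry $y_l^{-2}$ in position $(l,l)$. Differentiating once more, $\partial^2\mathcal{L}/\partial y_i\partial y_j=\delta_{li}\delta_{lj}\,y_l^{-2}$, so $H_\mathcal{L}(y)$ is exactly the same rank-one matrix $y_l^{-2}e_le_l^T$. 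This establishes the inner identity, and hence the lemma.

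I do not anticipate a genuine obstacle — the whole argument is a short computation — but the step that actually carries the weight is the last one, and it is worth stressing \emph{why} it works: the coincidence $\nabla_y\mathcal{L}\,\nabla_y\mathcal{L}^T=H_\mathcal{L}(y)$ is special to a one-hot target. For a general $t$ one has $H_\mathcal{L}(y)=\mathrm{diag}(t_i/y_i^2)$, which is diagonal, while $\nabla_y\mathcal{L}\,\nabla_y\mathcal{L}^T$ has off-diagonal entries $t_it_j/(y_iy_j)$; these agree only when $t$ has a single nonzero component, which is precisely the hypothesis of the lemma. The two minor points requiring care are keeping the Jacobian/transpose convention consistent with the definition of $GN(x)$, and observing that $\nabla_y\mathcal{L}$ here is the unconstrained gradient of $y\mapsto-\log y_l$ on $\mathbb{R}^N_{>0}$ rather than a gradient along the probability simplex, so that the plain chain rule $\nabla_x\mathcal{L}=J_y(x)\nabla_y\mathcal{L}$ is exact.
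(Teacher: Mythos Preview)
Your proposal is correct and follows essentially the same route as the paper: reduce via the chain rule $\nabla_x\mathcal{L}=J_y(x)\nabla_y\mathcal{L}$ to the inner identity $\nabla_y\mathcal{L}\,\nabla_y\mathcal{L}^T=H_\mathcal{L}(y)$, and then verify that both sides equal $y_l^{-2}e_le_l^T$ by direct differentiation of $-\log y_l$. Your additional remarks on why the one-hot hypothesis is essential and on the Jacobian convention are helpful clarifications but do not change the argument.
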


  \begin{proof}
    First, by chain rule, we have $\nabla_x\mathcal{L} = J_y(x)\nabla_y\mathcal{L}$.
    By cancellation of $J_y(x)$ on both sides, it suffices to show ${\nabla_y\mathcal{L}}\nabla_y\mathcal{L}^T=H_\mathcal{L}(y)$.
    Now, let us calculate the gradient:
    \begin{align}
     \begin{split}
    \pd{\mathcal{L}}{y_i}&=-\frac{t_i}{y_i}  \\
    &=-\frac{\delta_{il}}{y_i}
      \end{split}
    \end{align}
      Then, compute the out product:$
                ({\nabla_y\mathcal{L}}\nabla_y\mathcal{L}^T)_{ij}
                =\frac{\delta_{il}\delta_{jl}}{y_i y_j}
              $. This means all the off-diagonal items are zeros, and only one term left in the diagonal, which is $({\nabla_y\mathcal{L}}^T\nabla_y\mathcal{L})_{ll}=\frac{1}{y_l^2}$. Let us compute Hessian matrix:
                 \begin{align}
                  \begin{split}
       \he{\mathcal{L}}{y_j}{y_i} &=-\pd{}{y_j}\frac{\delta_{il}}{y_i}\\
              &=\frac{\delta_{il}\delta_{ij}}{y_i^2}
                \end{split}
                \end{align}
            Hence, $H_\mathcal{L}(y)$ also have only one diagonal term, which is $(H_\mathcal{L}(y))_{ll}=\frac{1}{y_l^2} $. Therefore, we have the equality as desired.
    \end{proof}
 \bibliography{example_paper}
 \bibliographystyle{IEEEtran}

\end{document}